\newcommand{\todol}[2][]{\todo[color=blue!20,size=\tiny,#1]{L: #2}} % Ge's comments
\declaretheorem[name=Theorem,refname={Theorem,Theorems},Refname={Theorem,Theorems}]{theorem}
\declaretheorem[name=Lemma,refname={Lemma,Lemmas},Refname={Lemma,Lemmas},sibling=theorem]{lemma}
\newcommand{\cA}{\mathcal{A}}
\newcommand{\cN}{\mathcal{N}}
\newcommand{\cS}{\mathcal{S}}
\newcommand{\cX}{\mathcal{X}}
\newcommand{\realset}{\mathbb{R}}
\newcommand{\E}[1]{\mathbb{E} \left[#1\right]}
\newcommand{\condE}[2]{\mathbb{E} \left[#1 \,\middle|\, #2\right]}
\newcommand{\Erv}[2]{\mathbb{E}_{#1} \left[#2\right]}
\newcommand{\condprob}[2]{\mathbb{P} \left(#1 \,\middle|\, #2\right)}
\newcommand{\I}[1]{\mathds{1} \! \left\{#1\right\}}
\newcommand{\T}{^\top}
\DeclareMathOperator*{\argmax}{arg\,max\,}
\DeclareMathOperator{\sgn}{sgn}
\mathchardef\mhyphen="2D
\newcommand{\setdr}{\ensuremath{\tt SetDR}\xspace}
\newcommand{\setips}{\ensuremath{\tt SetIPS}\xspace}
\title{Off-Policy Evaluation from Logged Human Feedback}
\author{
  Aniruddha Bhargava$^*$ \\
  Amazon
  \And
  Lalit Jain$^*$ \\
  University of Washington
  \And
  Branislav Kveton$^*$ \\
  AWS AI Labs
  \And
  Ge Liu$^*$ \\
  University of Illinois Urbana-Champaign
  \And
  Subhojyoti Mukherjee$^*$ \\
  University of Wisconsin–Madison
}
\begin{document}

\maketitle

\begin{abstract}
Learning from human feedback has been central to recent advances in artificial intelligence and machine learning. Since the collection of human feedback is costly, a natural question to ask is if the new feedback always needs to collected. Or could we evaluate a new model with the human feedback on responses of another model? This motivates us to study off-policy evaluation from logged human feedback. We formalize the problem, propose both model-based and model-free estimators for policy values, and show how to optimize them. We analyze unbiasedness of our estimators and evaluate them empirically. Our estimators can predict the absolute values of evaluated policies, rank them, and be optimized.
\end{abstract}

\section{Introduction}
\label{sef:introduction}

\emph{Large language models (LLMs)} \citep{bommasani21opportunities} have recently emerged as general purpose inference machines that achieve human-level performance on a wide range of tasks \citep{brown20language,mirchandani2023large}. The key step in training them is \emph{reinforcement learning with human feedback (RLHF)}, where these models are aligned to generate human-preferred text \citep{ouyang22training}. The key idea in RLHF is to use human feedback to learn a latent reward model. After this, a policy is optimized to maximize the reward under the reward model. In DPO \citep{rafailov23direct}, the reward model is reparameterized using the policy, which is then optimized. One common property of RLHF and DPO is their reliance on human feedback. After each alignment, a new dataset with human feedback is generated. A natural question to ask is: do new datasets always need to be collected or could we evaluate a new LLM with human feedback on responses of another LLM?

\renewcommand{\thefootnote}{\fnsymbol{footnote}}
\footnotetext[1]{This work was done outside of Amazon. The author names are listed alphabetically.}
\renewcommand{\thefootnote}{\arabic{footnote}}

Motivated by this question, we study off-policy evaluation \citep{li10contextual,bottou13counterfactual,li15counterfactual,hofmann16online} with human feedback. We formulate the problem as follows. Given a logged dataset of $n$ lists of responses, which are generated by an LLM and re-ranked by humans according to their preferences, we want to estimate how another LLM would align with human feedback without another human study. We say that the LLM aligns with human feedback when its first response is the same as the most preferred human response. We envision two main use cases for our approach. The first use case is counterfactual evaluation \citep{bottou13counterfactual,li15counterfactual}, akin to those in learning to rank \citep{joachims17unbiased,swaminathan17offpolicy,li18offline}. The second use case is LLM alignment \citep{ouyang22training,rafailov23direct} with an interpretable objective that represents counterfactual human feedback. To allow these, we make algorithmic contributions in off-policy evaluation and optimization, and summarize them below:
\begin{enumerate}
  \item We formalize the problem of off-policy evaluation from logged human feedback as offline evaluation with ranked lists \citep{joachims17unbiased,swaminathan17offpolicy,li18offline}. Specifically, given a logged dataset of $n$ lists of length $K$, which are generated by the logging policy and re-ranked by humans according to their preferences, we want to estimate how another policy would align with human feedback without another human study. The novelty in our work is in a new feedback model induced by the \emph{Plackett-Luce (PL) model} \citep{plackett75analysis,luce05individual,zhu23principled}. The \emph{Bradley-Terry-Luce (BTL) model} \citep{bradley52rank,sekhari2024contextual} is a special case of $K = 2$. From the feedback point of view, our problem is both a bandit \citep{lattimore19bandit}, because only $K$ responses are ranked by a human; and full-information \citep{auer95gambling}, because the full ranking of the $K$ responses is observed.
  \item We propose model-based \citep{robins94estimation,dudik14doubly} and model-free \citep{horwitz52generalization,strehl10learning} estimators. The model-based estimator relies on a learned reward model, similarly to RLHF. However, unlike in RLHF, the mean reward is the probability that the policy aligns with human feedback, instead of being latent. This makes our estimator interpretable. The model-free approach is based on \emph{inverse propensity scores (IPS)} \citep{horwitz52generalization,ionides08truncated}. We improve upon a naive application of IPS over ranked lists by IPS over sets of responses (\cref{sec:advanced ips}). This is only possible because of the PL feedback model, which depends on the set of ranked responses but not their order. We analyze basic properties of our estimators, such as unbiasedness. We also show how to optimize them in a practical way \citep{swaminathan15counterfactual}.
  \item We comprehensively evaluate our estimators in multiple experiments. First, we show that they can estimate the absolute value of evaluated policies. Second, we show that they can rank policies better than the latent reward functions that RLHF and DPO optimize. Third, we confirm our findings on a real-world dataset where the reward model is misspecified. Finally, we show that optimization of our estimators leads to comparable policies to those learned by RLHF and DPO.
\end{enumerate}

This paper is organized as follows. In \cref{sec:setting}, we introduce the problem of off-policy evaluation from logged human feedback. In \cref{sec:off-policy evaluation}, we present our model-based and model-free estimators. In \cref{sec:off-policy optimization}, we show how to optimize our estimators. In \cref{sec:experiments}, we empirically evaluate our estimators. We review prior works in \cref{sec:related work} and conclude in \cref{sec:conclusions}.

\section{Setting}
\label{sec:setting}

We study the following setting. A policy, for instance given by an LLM, interacts with a human for $n$ rounds. In round $t \in [n]$, the policy generates a ranked list of $K$ responses $A_t$ to the query in round $t$. The human critiques $A_t$ and provides their preferred order of the responses $A_{t, *}$, represented as a permutation of $A_t$. The policy \emph{aligns with human feedback} in round $t$ when $A_t$ and $A_{t, *}$ are \emph{similar}. We want to reuse previously logged data to estimate alignment of another policy without collecting additional human feedback.

We formalize our problem as follows. The query in round $t$ is $x_t \in \cX$, where $\cX$ is the query set. There are $L$ potential responses to any query and we denote the set of integers that indexes them by $\cA = [L]$. We do not assume that $L$ is small and comment on its impact on our estimators throughout the paper. A logging policy $\pi_0$ generates a ranked list of $K$ responses $A_t = (a_{t, i})_{i = 1}^K$ from $\cA$, where $a_{t, i} \in \cA$ is the $i$-th response. The human responds with a permutation of $A_t$, $A_{t, *} = (a_{t, *, i})_{i = 1}^K$, that represents their preferred order of the responses.

The setting of $K < L$ is motivated by the limited capacity of humans to provide preferential feedback on too many choices $L$. When $K = 2$, we have a relative feedback over two responses, as in the Bradley-Terry-Luce model \citep{bradley52rank}. When $K > 2$, we have a ranking feedback over $K$ responses, as in the Plackett-Luce model \citep{plackett75analysis,luce05individual}. We define the policies and human alignment next.

\textbf{Policy.} We denote by $\pi(a \mid x)$ the probability that policy $\pi$ generates a response $a \in \cA$ to query $x \in \cX$. A ranked list of $K$ responses $A = (a_i)_{i = 1}^K$ is sampled from $\pi(\cdot \mid x)$ with probability
\begin{align}
  \pi(A \mid x)
  = \prod_{i = 1}^K \frac{\pi(a_i \mid x)}
  {\sum_{j = i}^L \pi(a_j \mid x)}\,,
  \label{eq:policy}
\end{align}
where $a_{k + 1}, \dots, a_L$ are arbitrarily ordered responses from $\cA \setminus A$. In plain English, the first response is sampled with probability $\pi(a_1 \mid x)$, the second with probability $\pi(a_2 \mid x) / \sum_{j = 2}^L \pi(a_j \mid x)$, and the $i$-th with probability $\pi(a_i \mid x) / \sum_{j = i}^L \pi(a_j \mid x)$. This is similar to the PL model \citep{plackett75analysis,luce05individual}.

The probabilities of individual responses can be obtained from an LLM as follows. Take $L$ most frequent responses to each query $x$ and let $\tilde{\pi}(a \mid x)$ be the probability (potentially unnormalized) of the tokens corresponding to response $a$. Then $\pi(a \mid x) = \tilde{\pi}(a \mid x) / \sum_{a' \in \cA} \tilde{\pi}(a' \mid x)$.

\textbf{Reward.} The alignment of the policy with human feedback can be defined in many ways. In this work, we say that the policy \emph{aligns with human feedback} in round $t$ when the first responses in $A_t$ and $A_{t, *}$ are identical. We represent the alignment using a numerical reward. Specifically, the \emph{reward} in round $t$ is $\I{a_{t, 1} = a_{t, *, 1}}$ and the \emph{mean reward} is
\begin{align}
  r(x_t, A_t)
  = \condprob{a_{t, 1} = a_{t, *, 1}}{x_t, A_t}\,,
  \label{eq:reward}
\end{align}
where $A_{t, *}$ is the human-preferred order of responses $A_t$. We wanted to comment on two aspects of the mean reward. First, note that $A_{t, *}$ is a random variable that depends on $A_t$ and $x_t$. Second, the mean reward is essentially the probability that the most preferred human response $a_{t, *, 1}$ is the first logged response $a_{t, 1}$.

The mean reward in \eqref{eq:reward} can be justified from many points of view. First, it is the probability that the first option out of $K$ is chosen in a discrete choice model \citep{train09discrete,benakiva18discrete}, a broad class of classic models of human preferences. Second, it is analogous to Precision@K in ranking \citep{manning08introduction} for $K = 1$, where the human feedback is the ground truth and the policy is a ranker. Finally, as we shall see in \cref{sec:off-policy evaluation}, this quantity can be estimated in a model-free fashion from our feedback, both on- and off-policy. For concreteness, one can think of the mean reward as
\begin{align}
  r(x, A; w)
  = \frac{\exp[\phi(x, a_1)\T w]}{\sum_{i = 1}^K \exp[\phi(x, a_i)\T w]}\,,
  \label{eq:reward model}
\end{align}
where $A = (a_i)_{i = 1}^K$ is a ranked list of $K$ responses, $\phi: \cX \times \cA \to \realset^d$ is a feature map, and $w \in \realset^d$ is a reward model parameter. We denote the true unknown parameter  by $w_*$. A natural way of thinking of \eqref{eq:reward model} is as the probability that the first response $A_1$ is preferred over the rest. The algebraic form is borrowed from the PL model. This has two implications. First, $w$ can be estimated using existing techniques for the PL model \citep{zhu23principled}. Second, the probability depends on all responses in $A$ but not their order. We exploit this property in the design of our more advanced estimators in \cref{sec:advanced ips}.

With the definitions of policies and human-feedback alignment in hand, we can define the value of a policy. The value of policy $\pi$ is the probability that its first response aligns with the human-preferred response. Formally, over $n$ rounds with queries $(x_t)_{t = 1}^n$, this is
\begin{align}
  V(\pi)
  = \frac{1}{n} \sum_{t = 1}^n \Erv{A_t \sim \pi(\cdot \mid x_t)}{r(x_t, A_t)}\,.
  \label{eq:policy value}
\end{align}
Our goal is to estimate this quantity from human feedback on responses generated by another policy $\pi_0$. Note that we could have assumed that $x_t \sim p$ for some distribution $p$. We do not because the stochasticity of queries is not necessary to derive any result in this work.

\section{Off-Policy Evaluation}
\label{sec:off-policy evaluation}

The key idea in our work is to pose the problem of off-policy evaluation from logged human feedback as a counterfactual evaluation problem \citep{li10contextual,bottou13counterfactual,li15counterfactual,hofmann16online} over ranked lists \citep{joachims17unbiased,swaminathan17offpolicy,li18offline}. Before we get to the details, note that this problem is particularly easy when the human feedback is collected using policy $\pi$. Then an unbiased estimate of $V(\pi)$ in \eqref{eq:policy value} is the frequency that the first responses in $A_t$ and $A_{t, *}$ are identical,
\begin{align}
  \hat{V}(\pi)
  = \frac{1}{n} \sum_{t = 1}^n \I{a_{t, 1} = a_{t, *, 1}}\,.
  \label{eq:counter}
\end{align}
We prove this in \cref{lem:counter} (\cref{sec:technical lemmas}). Now suppose that $A_t \sim \pi_0(\cdot \mid x_t)$, where $\pi_0$ is the logging policy. Then $V(\pi)$ can be estimated using either model-based or model-free techniques.

\subsection{Direct Method}
\label{sec:dm}

A popular approach to off-policy evaluation is the \emph{direct method (DM)} \citep{dudik14doubly}. The key idea in the DM is to learn a reward model and then use it to estimate the expected value of a policy. A natural choice in our setting is \eqref{eq:reward model}. We estimate $w$ by solving the maximum likelihood estimation (MLE) problem
\begin{align}
  \hat{w}
  = \argmax_w \sum_{t = 1}^n \sum_{i = 1}^K
  \log\left(\frac{\exp[\phi(x_t, a_{t, *, i})\T w]}
  {\sum_{j = i}^K \exp[\phi(x_t, a_{t, *, j})\T w]}\right)\,.
  \label{eq:reward model estimation}
\end{align}
Note that this estimator is defined over $K$ responses while the reward model in \eqref{eq:reward model} is the probability of the first response over the rest. This is not harmful since all sampling stages in the PL model share the same parameter. With the reward model in hand, we estimate the value of policy $\pi$ as
\begin{align}
  \hat{V}_\textsc{dm}(\pi)
  = \frac{1}{n} \sum_{t = 1}^n
  \Erv{A \sim \pi(\cdot \mid x_t)}{r(x_t, A; \hat{w})}\,,
  \label{eq:dm}
\end{align}
where $r(x, A; \hat{w})$ is the estimated preference for response $a_1$ from $A$. Also note that \eqref{eq:reward model estimation} is analogous to learning the reward model in RLHF (\cref{sec:related work}). The difference in \eqref{eq:dm} is in how the reward model is used. In \eqref{eq:rlhf}, the latent preference for a single human response $a$ is estimated. In \eqref{eq:dm}, we estimate the probability that the first response in $A$ is preferred by the human.

The strength of the DM estimator is that the policy $\pi$ only needs to be sampled from. The probability $\pi(A \mid x_t)$, which requires normalization and thus a summation over $L$ terms in \eqref{eq:policy}, is not needed. On the other hand, the estimator may perform poorly when the reward model is biased.

\subsection{Propensity-Based Methods}
\label{sec:ips}

Another popular approach to off-policy evaluation are \emph{inverse propensity scores (IPS)} \citep{horwitz52generalization,ionides08truncated}. The key idea in IPS is to reweigh the data collected by the logging policy $\pi_0$ as if they were logged by the evaluated policy $\pi$. Specifically, let $A_t \sim \pi(\cdot \mid x_t)$. Then based on the definition of the expected value of a policy in \eqref{eq:policy value}, the IPS estimator is
\begin{align}
  \hat{V}_\textsc{ips}(\pi)
  = \frac{1}{n} \sum_{t = 1}^n
  \frac{\pi(A_t \mid x_t)}{\pi_0(A_t \mid x_t)}
  \I{a_{t, 1} = a_{t, *, 1}}\,.
  \label{eq:ips}
\end{align}
We prove that this estimator is unbiased in \cref{lem:ips} (\cref{sec:technical lemmas}).

The strength of the IPS estimator is that it does not make any assumptions about the reward model. It only reweighs the logged rewards $\I{a_{t, 1} = a_{t, *, 1}}$ by $\pi(A_t \mid x_t) / \pi_0(A_t \mid x_t)$. The estimator has two shortcomings. First, its variance can be high when the policies $\pi$ and $\pi_0$ are not close, because $\pi(A_t \mid x_t) / \pi_0(A_t \mid x_t)$ is high. Second, the computation of $\pi(A \mid x_t)$ requires normalization over $L$ terms in \eqref{eq:policy}, which may be challenging when $L$ is large.

The \emph{doubly-robust method (DR)} \citep{robins94estimation,dudik14doubly,jiang2016doubly} combines the advantages of the DM and IPS. Specifically, it uses the model as a variance-reduction techniques in the IPS. For the DM and IPS estimators in \eqref{eq:dm} and \eqref{eq:ips}, respectively, it is
\begin{align}
  \hat{V}_\textsc{dr}(\pi)
  = \frac{1}{n} \sum_{t = 1}^n
  \frac{\pi(A_t \mid x_t)}{\pi_0(A_t \mid x_t)}
  (\I{a_{t, 1} = a_{t, *, 1}} - r(x_t, A_t; \hat{w})) + \hat{V}_\textsc{dm}(\pi)\,.
  \label{eq:dr}
\end{align}
The DR estimator is unbiased when the DM is unbiased or the propensities in the IPS estimator are correctly specified. We prove this in \cref{lem:dr} (\cref{sec:technical lemmas})

The DR estimator tends to have a lower variance than the IPS estimator when the original rewards $\I{a_{t, 1} = a_{t, *, 1}}$ have high variances but the centered rewards $\I{a_{t, 1} = a_{t, *, 1}} - r(x_t, A_t; \hat{w})$ do not. The estimator also inherits the computational limitation of IPS, that the normalization over $L$ terms in \eqref{eq:policy} is needed to compute the propensity scores.

\subsection{Advanced Propensity-Based Methods}
\label{sec:advanced ips}

The additional structure of our problem allows us to improve the IPS estimator in \eqref{eq:ips}. To define the improved estimator, we need additional notation. Let $S_t$ be the set of responses in $A_t$ forgetting the ranking and $\nu(S_t \mid x_t)$ be the probability that the set $S_t$ is generated in round $t$ by the policy $\pi$. The quantities $\pi(A \mid x)$ and $\nu(S \mid x)$ are related as
\begin{align}
  \textstyle
  \nu(S \mid x)
  = \sum_{A \in \Pi(S)} \pi(A \mid x)\,,
  \label{eq:lists to sets}
\end{align}
for any $x$ and $S$, where $\Pi(S)$ is the set of all permutations of $S$. Moreover, let
\begin{align*}
  \pi(a \mid S, x)
  = \frac{\pi(a \mid x)}{\sum_{a' \in S} \pi(a' \mid x)}
\end{align*}
be the probability that policy $\pi$ generates a response $a$ from set $S$ to query $x$.

The key insight is that the human feedback tells us which response in $A_t$ is preferred. As a result, we can do counterfactual reasoning conditioned on $x_t$ and $S_t$. Specifically, let $\cS$ be the set of all subsets of $\cA$ of size $K$. Then the value of policy $\pi$ in \eqref{eq:policy value} can be rewritten as
\begin{align*}
  V(\pi)
  & = \frac{1}{n} \sum_{t = 1}^n \sum_{S_t \in \cS} \sum_{A_t \in \Pi(S_t)}
  \pi(A_t \mid x_t) \, r(x_t, A_t) \\
  & = \frac{1}{n} \sum_{t = 1}^n \sum_{S_t \in \cS} \nu(S_t \mid x_t)
  \sum_{A_t \in \Pi(S_t)} \frac{\pi(A_t \mid x_t)}{\nu(S_t \mid x_t)} \, r(x_t, A_t) \\
  & = \frac{1}{n} \sum_{t = 1}^n \sum_{S_t \in \cS} \nu(S_t \mid x_t) \,
  \condE{\I{a_{t, 1} = a_{t, *, 1}}}{x_t, S_t} \\
  & = \frac{1}{n} \sum_{t = 1}^n \sum_{S_t \in \cS} \nu(S_t \mid x_t) \,
  \condE{\sum_{a \in \cA} \pi(a \mid S_t, x_t) \, \I{a_{t, *, 1} = a}}{x_t, S_t} \\
  & = \frac{1}{n} \sum_{t = 1}^n \sum_{S_t \in \cS} \nu(S_t \mid x_t) \,
  \condE{\pi(a_{t, *, 1} \mid S_t, x_t)}{x_t, S_t} \\
  & = \frac{1}{n} \sum_{t = 1}^n
  \Erv{S_t \sim \nu(\cdot \mid x_t)}{\pi(a_{t, *, 1} \mid S_t, x_t)}\,.
\end{align*}
The fourth equality follows from the assumption that $A_{t, *}$ depends only on the set of responses $S_t$. The expectation in the last step is needed because $A_{t, *}$ remains to be a random variable.

Now note that the process of sampling ranked lists can also be viewed as sampling sets. Therefore, when $S_t \sim \nu(\cdot \mid x_t)$, $V(\pi)$ can be estimated as
\begin{align}
  \hat{V}_\textsc{set}(\pi)
  = \frac{1}{n} \sum_{t = 1}^n \pi(a_{t, *, 1} \mid S_t, x_t)\,.
  \label{eq:soft counter}
\end{align}
We prove that this estimator is unbiased in \cref{lem:soft counter} (\cref{sec:technical lemmas}). The main improvement over \eqref{eq:counter} is due to reducing variance, by eliminating the randomness in $A_t \mid S_t$. \todol{I think the proof of this is probably immediate. I'll try to add something to the appendix.} Similarly to the IPS estimator in \eqref{eq:ips}, when $S_t \sim \nu_0(\cdot \mid x_t)$, the IPS estimator becomes
\begin{align}
  \hat{V}_\textsc{set-ips}(\pi)
  = \frac{1}{n} \sum_{t = 1}^n
  \frac{\nu(S_t \mid x_t)}{\nu_0(S_t \mid x_t)}
  \pi(a_{t, *, 1} \mid S_t, x_t)\,.
  \label{eq:set ips}
\end{align}
We call it \setips because the propensity scores are over sets of responses instead of ranked lists. We prove that this estimator is unbiased in \cref{lem:set ips} (\cref{sec:technical lemmas}). 

The main improvement in \eqref{eq:set ips} over \eqref{eq:ips} is due to reducing variance. In particular, the propensities over a larger set, of all lists of length $K$, are replaced with a smaller set, of all sets of size $K$. Due to the relation in \eqref{eq:lists to sets}, this leads to higher propensities and therefore a better control of their ratios. The new estimator has an interesting behavior as $K \to L$, and in particular $K = L$. In the latter case, the propensities vanish because $\nu(S_t \mid x_t) = \nu_0(S_t \mid x_t) = 1$. The classic IPS estimator in \eqref{eq:ips} does not poses this property and would be cursed by low propensities. \setips also inherits the limitations of IPS, that the variance can be high and that the normalization over $L$ terms in \eqref{eq:policy} can be challenging.

Similarly to \eqref{eq:dr}, we can define a \emph{set doubly-robust method (\setdr)}. Specifically, the probability of aligning with human feedback under policy $\pi$ given set $S$ and reward model $r(x, A; w)$ is
\begin{align*}
  r(x, S; \pi, w)
  = \sum_{A \in \Pi(S)} \frac{\pi(A \mid x)}{\nu(S \mid x)} r(x, A; w)\,.
\end{align*}
When this model is used as a variance-reduction techniques in \setips, we get
\begin{align}
  \hat{V}_\textsc{set-dr}(\pi)
  = \frac{1}{n} \sum_{t = 1}^n
  \frac{\nu(S_t \mid x_t)}{\nu_0(S_t \mid x_t)}
  (\pi(a_{t, *, 1} \mid S_t, x_t) - r(x_t, S_t; \pi, \hat{w})) +
  \hat{V}_\textsc{dm}(\pi)\,.
  \label{eq:set dr}
\end{align}
The DR estimator is unbiased when the DM is unbiased or the propensities in the IPS estimator are correctly specified. We prove this in \cref{lem:set dr} (\cref{sec:technical lemmas})

\section{Off-Policy Optimization}
\label{sec:off-policy optimization}

The strength of our approach is that any estimator from \cref{sec:off-policy evaluation} can be used for optimization. In particular, let $\pi(a \mid x; \theta)$ be a parameterization of policy $\pi$ by $\theta \in \realset^d$. Then the optimization of its estimated value can be viewed as maximizing
\begin{align}
  \hat{V}(\pi) - \gamma \frac{1}{n} \sum_{t = 1}^n
  d(\pi(\cdot \mid x_t; \theta), \pi_0(\cdot \mid x_t))\,,
  \label{eq:optimized objective}
\end{align}
where $d(p, q)$ is the KL divergence between distributions $p$ and $q$ with support $\cA$ and $\gamma > 0$ is a tunable parameter. The KL term can be viewed as a constraint that forces $\pi$ to be close to $\pi_0$, and plays the same role as in RLHF and DPO (\cref{sec:related work}). The gradient of \eqref{eq:optimized objective} with respect to $\theta$ is
\begin{align}
  \nabla \hat{V}(\pi) - \gamma \frac{1}{n} \sum_{t = 1}^n
  \nabla d(\pi(\cdot \mid x_t; \theta), \pi_0(\cdot \mid x_t))\,.
  \label{eq:optimized gradient}
\end{align}
We derive $\nabla \hat{V}(\pi)$ for all of our estimators in \cref{sec:gradients}. To compute the sum, we sample $\tilde{A}_t \sim \pi(\cdot \mid x_t; \theta)$ and replace $\nabla d(\pi(\cdot \mid x_t; \theta), \pi_0(\cdot \mid x_t))$ with its unbiased estimate
\begin{align*}
  (\nabla \log \pi(\tilde{A}_t \mid x_t; \theta))
  (1 + \log \pi(\tilde{A}_t \mid x_t; \theta) - \log \pi_0(\tilde{A}_t \mid x_t))\,.
\end{align*}
We properly derive this gradient in \cref{sec:gradients}. As in RLHF, the normalization over $L$ terms in \eqref{eq:policy} is needed to compute the above log-probabilities.

\section{Experiments}
\label{sec:experiments}

We conduct four experiments. In \cref{sec:absolute error}, we evaluate our off-policy estimators on predicting the absolute value of a policy. In \cref{sec:relative error}, we evaluate them on ranking policies. In \cref{sec:llms}, we apply our estimators to large language models. Finally, in \cref{sec:policy optimization}, we optimize our off-policy estimators. In all plots but in \cref{fig:policy optimization}, we report standard errors of the estimates.

\subsection{Absolute Error}
\label{sec:absolute error}

We first evaluate how good our off-policy estimators are in predicting the absolute value of a policy. This experiment is conducted on synthetic problems, which are generated as follows. The number of potential responses is $L = 7$. We choose this value because the maximum number of unique ranked lists $7! = 5\,040$ when $K = L$. This is more than enough to illustrate the impracticality of naive IPS estimators and gains due to more advanced estimators (\cref{sec:advanced ips}). For each query $x \in \cX$, we generate a random vector $u_x \in [-1, 1]^4$. For each response $a \in \cA$, we generate a random vector $v_a \in [-1, 1]^4$. The feature vector of query $x$ and response $a$ is $\phi(x, a) = \mathrm{vec}(u_x v_a\T)$ and has length $d = 16$. The reward model parameter is sampled as $w_* \sim \cN(\mathbf{0}_d, 10^2 I_d)$.

We consider a parametric class of policies
\begin{align}
  \pi(a \mid x; \theta)
  = \frac{\exp[\phi(x, a)\T \theta]}{\sum_{i = 1}^L \exp[\phi(x, i)\T \theta]}\,.
  \label{eq:parametric policy}
\end{align}
The logging policy is specified by $\theta_0 = w_* + \varepsilon_0$ for $\varepsilon_0 \sim \cN(\mathbf{0}_d, 5^2 I_d)$. Because its per-dimension variance $5^2$ is much lower than that of $w_*$, the policy is likely to have a high reward. We evaluate $N = 5$ policies per run. Each evaluated policy is defined as $\theta_i = \theta_0 + \varepsilon_i$, where $\varepsilon_i \sim \cN(\mathbf{0}, \sigma_e^2 I)$ is its perturbation. The parameter $\sigma_e > 0$ defines how close the evaluated policy $\theta_i$ is to the logging policy $\theta_0$, and thus the hardness of the problem. We set $\sigma_e = 5$. The evaluation metric is the \emph{absolute error} $\frac{1}{N} \sum_{i = 1}^N |V(\theta_i) - \hat{V}(\theta_i)|$. All experiments are averaged over $50$ runs, where we randomize $w_*$, $\theta_0$, and all $\theta_i$. The default values of the parameters are $K = 2$ and $n = 3\,000$.

\begin{figure}[t]
  \centering
  \includegraphics[width=0.9\linewidth]{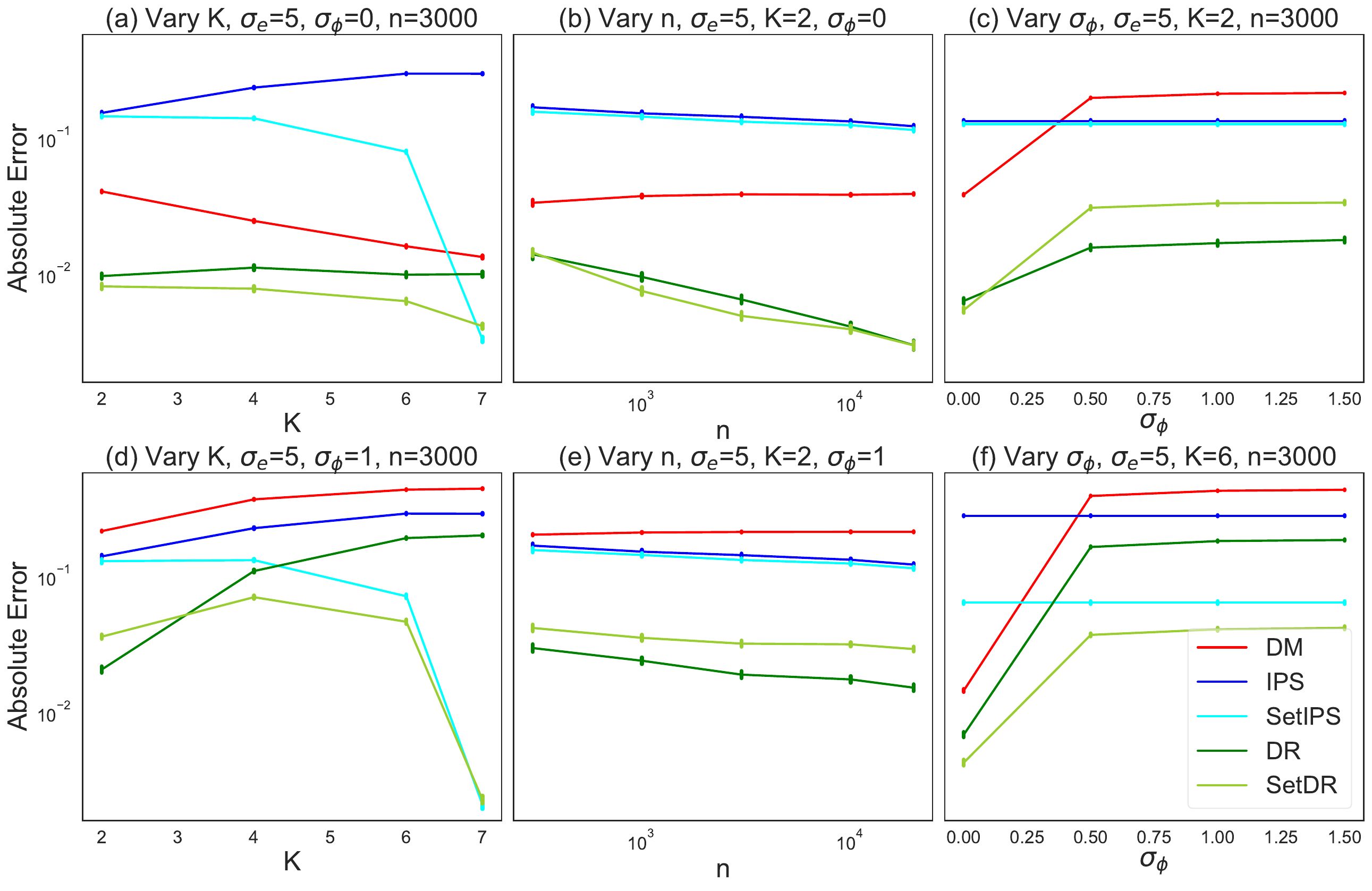}
  \vspace{-0.1in}
  \caption{Absolute error of estimated policy values as we vary $K$, $n$, and $\sigma_\phi$. }
  \label{fig:abserr_k2sig5phi0}
\end{figure}

Our results are reported in \cref{fig:abserr_k2sig5phi0}. In Figures \ref{fig:abserr_k2sig5phi0}a-b, the model is correctly specified. In \cref{fig:abserr_k2sig5phi0}a, we vary $K$. The best estimator is \setdr and the second best is DR. This does not come as a surprise since the superiority of DR estimators has long been recognized \citep{dudik14doubly}. The IPS estimator performs the worst at $K = L$. \setips leverages the full observability at $K = L$ (\cref{sec:advanced ips}) and thus performs well. In \cref{fig:abserr_k2sig5phi0}b, all estimators improve with more logged interactions $n$.

In the next experiments, the reward model can be misspecified. The misspecification is obtained by adding independent $\cN(0, \sigma_\phi^2)$ noise to $\phi(x, a)$ in the reward model in \eqref{eq:reward model}. In \cref{fig:abserr_k2sig5phi0}c, we vary $\sigma_\phi$ and observe its adverse effect on model-based estimators. Specifically, the DM becomes worse than IPS estimators, while the DR estimators remain more robust. In Figures \ref{fig:abserr_k2sig5phi0}d-e, we set $\sigma_\phi = 1$. In \cref{fig:abserr_k2sig5phi0}d, we vary $K$. We observe that the DM performs the worst and \setips is the second best estimator for $K > 4$. In \cref{fig:abserr_k2sig5phi0}e, all estimators improve with more logged interactions $n$. Finally, in \cref{fig:abserr_k2sig5phi0}f, we vary $\sigma_\phi$ and observe its adverse effect on model-based estimators, at $K = 6$.

\subsection{Relative Error}
\label{sec:relative error}

We left out RLHF and DPO from the absolute error comparison because they perform poorly. This is because they optimize a different notion of reward, which can be roughly viewed as the exponent in \eqref{eq:reward model}. However, since they are popular in optimization, they should be good in ranking policies. To test this, we conduct a relative comparison. Specifically, we repeat all experiments from \cref{sec:absolute error} but change the metric to the number of incorrectly ranked evaluated policies,
\begin{align*}
  \frac{2}{N (N - 1)} \sum_{i = 1}^N \sum_{j = i + 1}^N
  \I{\sgn(\hat{V}(\theta_i) - \hat{V}(\theta_j)) \neq \sgn(V(\theta_i) - V(\theta_j))}\,.
\end{align*}
We call this metric a \emph{relative error}. Our results are reported in \cref{fig:relerr_k2sig5phi0}. We observe two general trends. First, when the model is correctly specified, DM is among the best methods. Second, \setdr is always among the best methods.

\begin{figure}[t]
  \centering
  \includegraphics[width=0.9\linewidth]{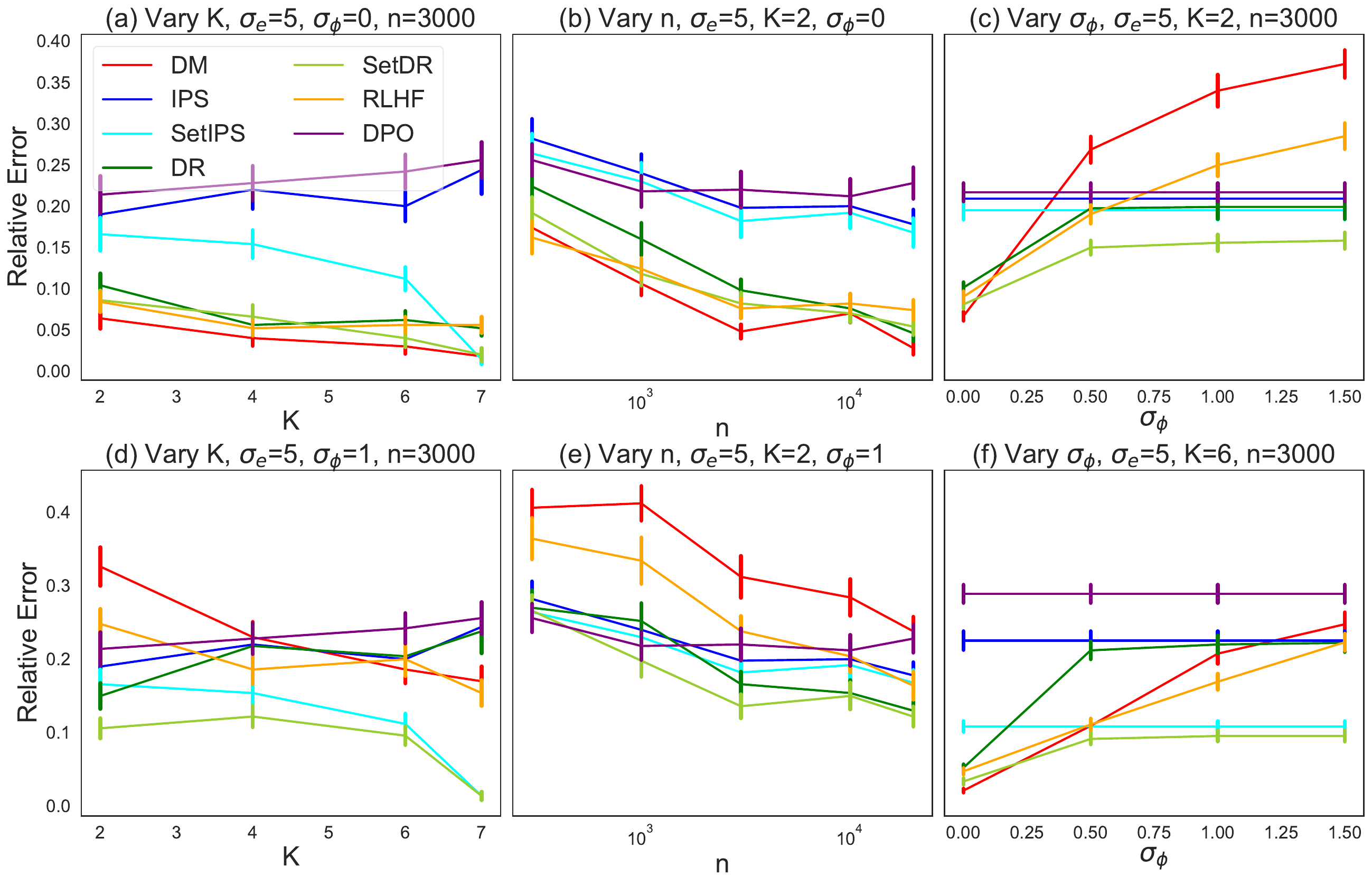}
  \vspace{-0.1in}
  \caption{Ranking error of estimated policy values as we vary $K$, $n$, and $\sigma_\phi$. }
  \label{fig:relerr_k2sig5phi0}
\end{figure}

\subsection{Large Language Models}
\label{sec:llms}

This experiment showcases our methods on a real-world Nectar dataset \citep{starling2023}. We take $500$ prompts from the dataset, each with $7$ responses generated by popular LLMs, and treat it as a logged dataset with $n = 500$ and $L = 7$. The feature vectors in \eqref{eq:reward model} are $768$-dimensional Instructor embeddings \citep{INSTRUCTOR} of the prompt and its response. We compute the propensities of individual responses using two large language models: $\pi_L(a \mid x)$ for LLama3 \citep{meta2024llama3} and $\pi_P(a \mid x)$ for Phi3 \citep{abdin2024phi}. We treat $\pi_P(a \mid x)$ as the logging policy and evaluate convex combinations of policies
\begin{align*}
  \pi_\alpha(a \mid x)
  = (1 - \alpha) \pi_P(a \mid x) + \alpha \pi_L(a \mid x)
\end{align*}
for $\alpha \in [0, 1]$. When $\alpha = 0$, the evaluated policy is the logging policy. When $\alpha = 1$, the evaluated policy is Llama3. The evaluation becomes progressively harder as $\alpha \to 1$. We use NVIDIA GeForce RTX 3090 GPU with 24GB RAM to load the large language models for $\pi_L(a \mid x)$ and $\pi_P(a \mid x)$. Phi3 requires less than 3GB RAM and LLama3 requires less than 7 GB RAM.

\begin{figure}[t]
  \centering
  \includegraphics[width=0.9\linewidth]{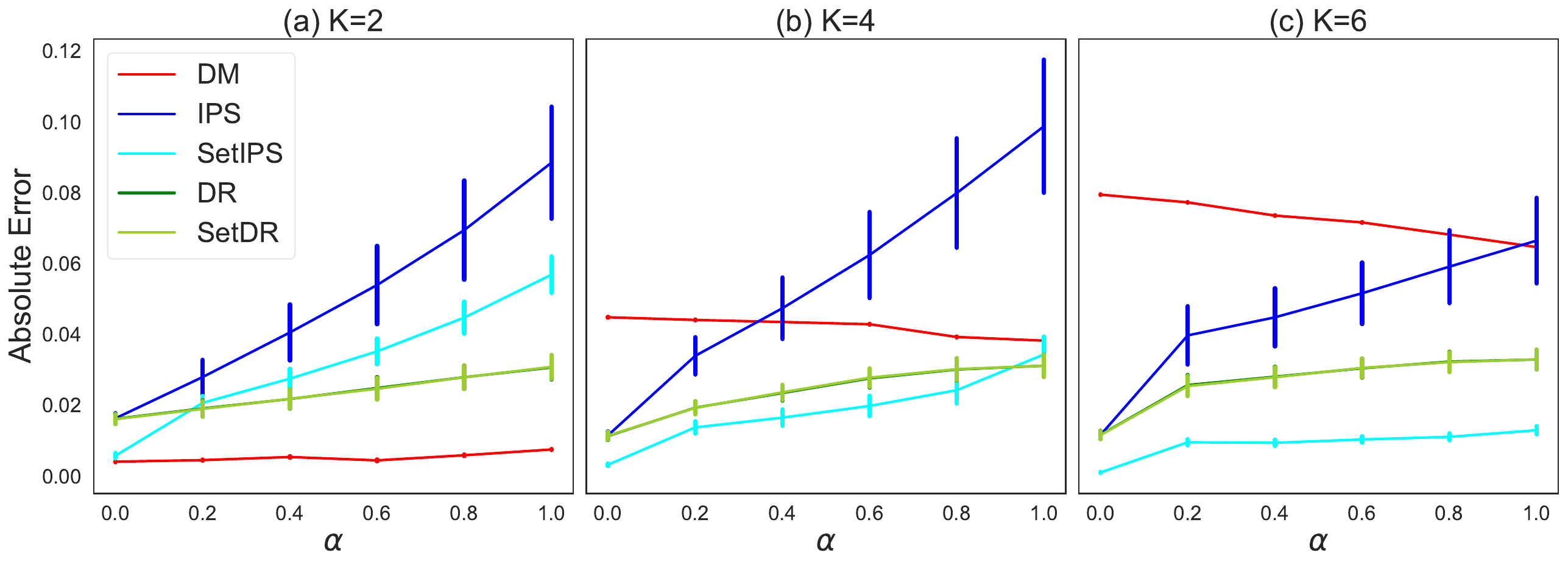}
  \caption{Evaluation of large language model policies on Nectar dataset by the absolute error. The parameter $\alpha \in [0, 1]$ interpolates between Phi3 and Llama3 policies.}
  \label{fig:expt3}
\end{figure}

Our results are reported in \cref{fig:expt3}. For $K > 2$, \setips performs the best. It is followed the DR and \setdr, which perform almost identically. The IPS estimator performs the worst for all $K$. The DM works well for $K = 2$. This is because the mean rewards of the best two responses are close to $0.5$ in most queries. As a result, they can be estimated well by a constant $0.5$, while the propensity scores in IPS methods only introduce unnecessary variance as $\alpha \to 1$.

\subsection{Policy Optimization}
\label{sec:policy optimization}

We experiment with the same problems as in \cref{sec:absolute error}. In the first problem, $K = 2$ and the logging policy is uniform, $\theta_0 = \mathbf{0}_d$. In the second problem, $K = 2$ and the logging policy has a high reward. We generate it as in \cref{sec:absolute error}. In the third problem, $K = 4$ and the logging policy is uniform. We optimize our estimators as described in \cref{sec:off-policy optimization}. RLHF and DPO are implemented as described in \cref{sec:related work}. We set $\gamma = 0.001$ and optimize the policies by Adam \citep{kingma15adam}. The logged dataset size is $n = 1\,000$. We choose these settings because they resulted in stable optimization results.

\begin{figure}[t]
  \centering
  \includegraphics[width=5.4in]{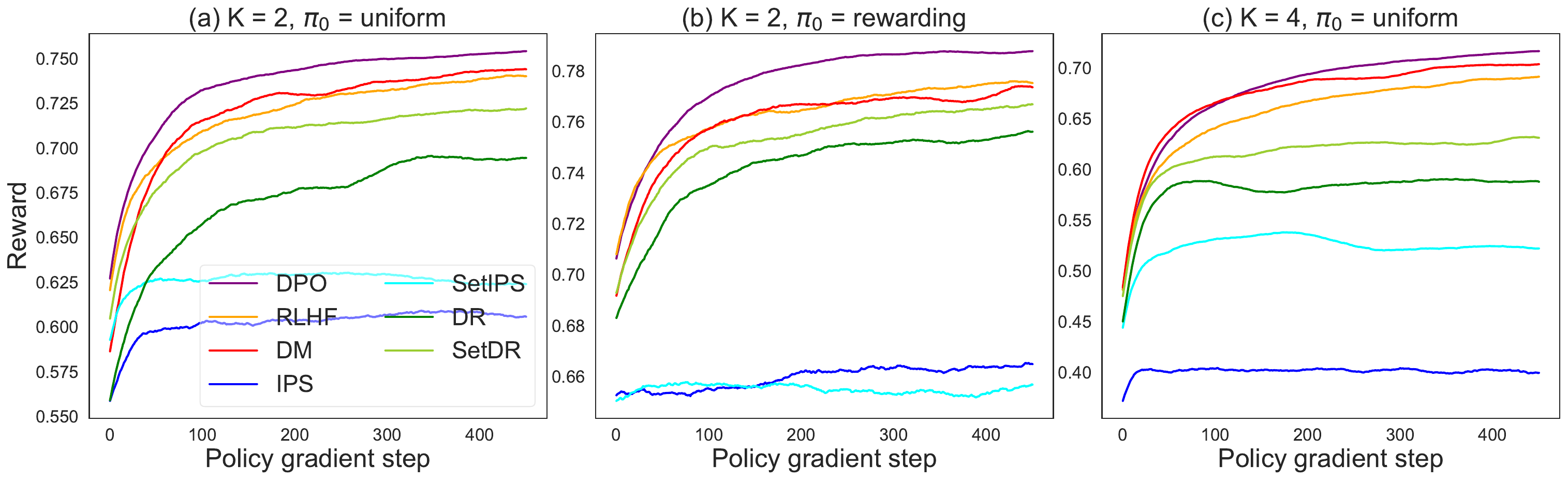}
  \vspace{-0.05in}
  \caption{Policy optimization of our estimators, together with RLHF and DPO. The plots are smoothed out by averaging over a window of $50$ steps.}
  \label{fig:policy optimization}
\end{figure}

In \cref{fig:policy optimization}, we report the values of all optimized policies in a single optimization run. The values are estimated using \eqref{eq:soft counter}. We observe two main trends. In all plots, DPO is among the best methods. Second, DM sometimes outperforms RLHF, while \setdr is the second best method. Interestingly, while our estimators are not designed for optimization, some of them perform well in this setting.

\section{Related Work}
\label{sec:related work}

The closest related works are off-policy evaluation and optimization for ranking \citep{joachims17unbiased,swaminathan17offpolicy,li18offline}. We differ from them in two key aspects. First, they consider absolute feedback on individual responses, such as clicks. Our feedback model is relative, the preference of one response over another. Second, these works assume that the responses in ranked lists are not affected by the logging policy; only their order is. Our logged lists contain $K$ responses out of $L > K$, and thus are affected by logging. This setting is motivated by the limited capacity of humans to provide relative feedback on too many responses, when $L \gg K$.

While we focus on evaluation, recent works on learning with human feedback primarily focused on optimization. We take RLHF \citep{ouyang22training} as an example. In RLHF, the goal is to optimize
\begin{align}
  \frac{1}{n} \sum_{t = 1}^n
  \Erv{a \sim \pi(\cdot \mid x_t; \theta)}{r_\textsc{rlhf}(x_t, a)} -
  \gamma d(\pi(\cdot \mid x_t; \theta), \pi_0(\cdot \mid x_t))\,,
  \label{eq:rlhf}
\end{align}
where $r_\textsc{rlhf}(x, a)$ in an estimate of the latent preference for response $a$ to query $x$, $d(p, q)$ is the KL divergence between distributions $p$ and $q$ with support $\cA$, $\gamma > 0$ is a tunable parameter, and $\theta$ is an optimized policy parameter. The reward model can be viewed as $r_\textsc{rlhf}(x, a) = \phi(x, a)\T w$, where $\phi$ is the same feature map as in \eqref{eq:reward model} and $w$ is learned as in \eqref{eq:reward model estimation}. The first term in \eqref{eq:rlhf} can be viewed as the value of policy $\theta$ in \eqref{eq:policy value}. Since $\phi(x, a)\T w_\textsc{rlhf}$ is latent, the value is not interpretable. This is the main conceptual difference from our work. We also study a plethora of policy value estimators, beyond what would be considered as the direct method in \eqref{eq:rlhf}.

One shortcoming of RLHF is the estimation of the latent reward model. This motivates DPO \citep{rafailov23direct}, where the latent reward is reparameterized using the optimized policy based on the structure of the maximized objective. For pairwise feedback, DPO maximizes
\begin{align}
  \sum_{t = 1}^n \log \sigma\left(
  \gamma \log\left(\frac{\pi(A_{t, *, 1} \mid x_t; \theta)}
  {\pi(A_{t, *, 1} \mid x_t; \theta_0)}\right) -
  \gamma \log\left(\frac{\pi(A_{t, *, 2} \mid x_t; \theta)}
  {\pi_0(A_{t, *, 2} \mid x_t)}\right)\right)\,,
  \label{eq:dpo}
\end{align}
where $A_{t, *} = (A_{t, *, 1}, A_{t, *, 2})$ and $\sigma(v) = 1 / (1 + \exp[v])$ is the sigmoid.

\section{Conclusions}
\label{sec:conclusions}

Learning from human feedback has been central to recent advances in artificial intelligence and machine learning. As more datasets with human feedback are collected, a natural question arises: do new datasets always need to be collected or can we reuse the old ones to estimate how a human would respond to a new policy? We propose both model-based and model-free estimators for this setting, analyze their unbiasedness, and show how to optimize them. We evaluate them empirically, on both synthetic and real-world datasets, and on both evaluation and optimization tasks. A natural direction to future work is studying other models of human feedback and reward \citep{train09discrete,benakiva18discrete}.

\bibliographystyle{plainnat}
\bibliography{biblio,Brano}

\clearpage
\onecolumn
\appendix

\section{Technical Lemmas}
\label{sec:technical lemmas}

\begin{lemma}
\label{lem:counter} Let $\hat{V}(\pi)$ be defined as in \eqref{eq:counter}. Then $\E{\hat{V}(\pi)} = V(\pi)$.
\end{lemma}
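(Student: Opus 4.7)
The plan is to proceed by a standard tower-of-expectations argument that simply unpacks the two definitions. By linearity of expectation applied to the definition in \eqref{eq:counter},
\begin{align*}
  \E{\hat{V}(\pi)}
  = \frac{1}{n} \sum_{t = 1}^n \E{\I{a_{t, 1} = a_{t, *, 1}}}\,,
\end{align*}
so it suffices to show that for every round $t$ the single expectation equals $\Erv{A_t \sim \pi(\cdot \mid x_t)}{r(x_t, A_t)}$, and then invoke \eqref{eq:policy value}.

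For a single round, I would identify the two sources of randomness: first $A_t \sim \pi(\cdot \mid x_t)$ (since the data are assumed to be on-policy for $\pi$), and then $A_{t, *}$ drawn from the human's feedback distribution conditional on $(x_t, A_t)$. Conditioning on $(x_t, A_t)$ and using the tower property gives
\begin{align*}
  \E{\I{a_{t, 1} = a_{t, *, 1}}}
  = \Erv{A_t \sim \pi(\cdot \mid x_t)}{\condE{\I{a_{t, 1} = a_{t, *, 1}}}{x_t, A_t}}\,.
\end{align*}
The inner conditional expectation is exactly $\condprob{a_{t, 1} = a_{t, *, 1}}{x_t, A_t} = r(x_t, A_t)$ by the definition of the mean reward in \eqref{eq:reward}, since $a_{t, 1}$ is deterministic given $A_t$.

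Substituting back and comparing with the definition of $V(\pi)$ in \eqref{eq:policy value} then closes the argument. There is no real obstacle here; the only thing worth being careful about is that $x_t$ is treated as fixed (as stressed in the remark after \eqref{eq:policy value}), so all expectations are taken only over $A_t$ and $A_{t, *}$, and that the policy is assumed to be on-policy (i.e., $A_t \sim \pi(\cdot \mid x_t)$), which is the hypothesis distinguishing this lemma from the IPS and DR cases proved later.
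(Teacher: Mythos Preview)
Your proposal is correct and follows essentially the same approach as the paper's own proof: apply linearity, then the tower property conditioning on $(x_t, A_t)$, identify the inner conditional expectation with $r(x_t, A_t)$ via \eqref{eq:reward}, and sum to recover \eqref{eq:policy value}. The only difference is cosmetic presentation; the paper writes the chain of equalities slightly more compactly but the logic is identical.
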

\begin{proof}
The proof follows from a sequence of identities,
\begin{align*}
  \E{\hat{V}(\pi)}
  & = \frac{1}{n} \sum_{t = 1}^n
  \Erv{A_t \sim \pi(\cdot \mid x_t), \, A_{t, *} \sim \cdot \mid x_t, A_t}
  {\I{a_{t, 1} = a_{t, *, 1}}} \\
  & = \frac{1}{n} \sum_{t = 1}^n
  \Erv{A_t \sim \pi(\cdot \mid x_t)}
  {\condprob{a_{t, 1} = a_{t, *, 1}}{x_t, A_t}} \\
  & = \frac{1}{n} \sum_{t = 1}^n
  \Erv{A_t \sim \pi(\cdot \mid x_t)}{r(x_t, A_t)}
  = V(\pi)\,.
\end{align*}
This completes the proof.
\end{proof}

\begin{lemma}
\label{lem:ips} Let $\hat{V}_\textsc{ips}(\pi)$ be defined as in \eqref{eq:ips}. Then $\E{\hat{V}_\textsc{ips}(\pi)} = V(\pi)$.
\end{lemma}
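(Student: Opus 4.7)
The proof will follow a standard importance-sampling argument, closely paralleling the structure of the proof of \cref{lem:counter} but with an extra change-of-measure step.

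The plan is to start by writing
\begin{align*}
  \E{\hat{V}_\textsc{ips}(\pi)}
  = \frac{1}{n} \sum_{t = 1}^n
  \Erv{A_t \sim \pi_0(\cdot \mid x_t), \, A_{t, *} \sim \cdot \mid x_t, A_t}
  {\frac{\pi(A_t \mid x_t)}{\pi_0(A_t \mid x_t)} \I{a_{t, 1} = a_{t, *, 1}}}\,.
\end{align*}
Since the importance ratio depends only on $A_t$ and $x_t$, I would pull it out of the inner expectation over $A_{t, *}$. Applying the tower rule and using the definition of the mean reward in \eqref{eq:reward} then gives, for each $t$,
\begin{align*}
  \Erv{A_t \sim \pi_0(\cdot \mid x_t)}
  {\frac{\pi(A_t \mid x_t)}{\pi_0(A_t \mid x_t)}
  \condprob{a_{t, 1} = a_{t, *, 1}}{x_t, A_t}}
  = \Erv{A_t \sim \pi_0(\cdot \mid x_t)}
  {\frac{\pi(A_t \mid x_t)}{\pi_0(A_t \mid x_t)} r(x_t, A_t)}\,.
\end{align*}

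Next, I would carry out the importance-sampling change of measure by expanding the expectation as a sum over the finite support of ranked lists,
\begin{align*}
  \sum_{A} \pi_0(A \mid x_t)
  \frac{\pi(A \mid x_t)}{\pi_0(A \mid x_t)} r(x_t, A)
  = \sum_{A} \pi(A \mid x_t) \, r(x_t, A)
  = \Erv{A \sim \pi(\cdot \mid x_t)}{r(x_t, A)}\,.
\end{align*}
Plugging this back into the sum over $t$ and dividing by $n$ matches the definition of $V(\pi)$ in \eqref{eq:policy value}, completing the proof.

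There is no real obstacle here beyond bookkeeping; the only mild subtlety is absolute continuity, namely that $\pi_0(A \mid x_t) > 0$ whenever $\pi(A \mid x_t) > 0$ so that the ratio is well-defined on the support of $\pi(\cdot \mid x_t)$. This is standard and I will note it in passing rather than belabor it. I also need to be careful that the inner conditional expectation over $A_{t, *}$ really does produce $r(x_t, A_t)$; this is immediate from \eqref{eq:reward}, and is the same step used in \cref{lem:counter}.
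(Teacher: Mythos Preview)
Your proposal is correct and follows essentially the same approach as the paper: tower rule to replace the indicator by $r(x_t,A_t)$, then the standard change of measure from $\pi_0$ to $\pi$. You add an explicit finite-sum expansion and a remark on absolute continuity that the paper omits, but the argument is otherwise identical.
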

\begin{proof}
The proof is similar to \cref{lem:counter}, with the addition of propensity scores. In particular,
\begin{align*}
  \E{\hat{V}_\textsc{ips}(\pi)}
  & = \frac{1}{n} \sum_{t = 1}^n
  \Erv{A_t \sim \pi_0(\cdot \mid x_t), \, A_{t, *} \sim \cdot \mid x_t, A_t}
  {\frac{\pi(A_t \mid x_t)}{\pi_0(A_t \mid x_t)}
  \I{a_{t, 1} = a_{t, *, 1}}} \\
  & = \frac{1}{n} \sum_{t = 1}^n
  \Erv{A_t \sim \pi_0(\cdot \mid x_t)}
  {\frac{\pi(A_t \mid x_t)}{\pi_0(A_t \mid x_t)}
  \condprob{a_{t, 1} = a_{t, *, 1}}{x_t, A_t}} \\
  & = \frac{1}{n} \sum_{t = 1}^n
  \Erv{A_t \sim \pi(\cdot \mid x_t)}
  {\condprob{a_{t, 1} = a_{t, *, 1}}{x_t, A_t}} \\
  & = \frac{1}{n} \sum_{t = 1}^n
  \Erv{A_t \sim \pi(\cdot \mid x_t)}{r(x_t, A_t)}
  = V(\pi)\,.
\end{align*}
This completes the proof.
\end{proof}

\begin{lemma}
\label{lem:dr} Let $\hat{V}_\textsc{dr}(\pi)$ be defined as in \eqref{eq:dr}. Then $\E{\hat{V}_\textsc{dr}(\pi)} = V(\pi)$ when the DM is unbiased or the propensities in the IPS estimator are correctly specified.
\end{lemma}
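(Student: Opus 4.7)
The plan is to mirror the classic doubly-robust argument, splitting $\hat{V}_\textsc{dr}(\pi)$ into its direct-method piece and an IPS-style correction, and then showing that the correction vanishes in expectation under either hypothesis. Concretely, I would start by writing
\begin{align*}
  \E{\hat{V}_\textsc{dr}(\pi)}
  = \E{\hat{V}_\textsc{dm}(\pi)} + \frac{1}{n} \sum_{t=1}^n
  \Erv{A_t \sim \pi_0(\cdot \mid x_t),\, A_{t,*}}
  {\frac{\pi(A_t \mid x_t)}{\pi_0(A_t \mid x_t)}
  \bigl(\I{a_{t,1} = a_{t,*,1}} - r(x_t, A_t; \hat{w})\bigr)}\,,
\end{align*}
treating $\hat{w}$ as fixed (say, computed on an independent split, or condition on it; this is the standard assumption used to keep the analysis clean).

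For the first case, suppose the propensities are correctly specified, i.e.\ the logged lists are truly drawn from $\pi_0$. Then I would apply the same change-of-measure step as in \cref{lem:ips}: the ratio $\pi(A_t\mid x_t)/\pi_0(A_t\mid x_t)$ converts the expectation under $\pi_0$ into one under $\pi$, yielding
\begin{align*}
  \Erv{A_t \sim \pi(\cdot \mid x_t)}
  {\condprob{a_{t,1} = a_{t,*,1}}{x_t, A_t} - r(x_t, A_t; \hat{w})}
  = \Erv{A_t \sim \pi(\cdot \mid x_t)}{r(x_t, A_t) - r(x_t, A_t; \hat{w})}\,,
\end{align*}
using the definition of the mean reward in \eqref{eq:reward}. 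Averaging over $t$ gives $V(\pi) - \E{\hat{V}_\textsc{dm}(\pi)}$, which cancels the $\E{\hat{V}_\textsc{dm}(\pi)}$ term and leaves $V(\pi)$.

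For the second case, suppose the DM is unbiased, meaning $r(x, A; \hat{w}) = r(x, A)$ (in expectation over $\hat{w}$, pointwise in $(x, A)$). Here I would instead condition on $(x_t, A_t)$ inside the correction: because $A_{t,*}$ depends only on $(x_t, A_t)$, the inner conditional expectation collapses to $r(x_t, A_t) - r(x_t, A_t; \hat{w})$, which is zero in expectation. Hence the correction vanishes and $\E{\hat{V}_\textsc{dr}(\pi)} = \E{\hat{V}_\textsc{dm}(\pi)} = V(\pi)$, the last equality using DM unbiasedness.

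The main delicate point is the handling of $\hat{w}$: one must either condition on it (an independent-split assumption) or, equivalently, interpret "DM unbiased'' as $\E{r(x, A; \hat{w})} = r(x, A)$ so that taking expectations commutes with the argument above. I would state this assumption explicitly at the start of the proof so the two cases can be handled by a single uniform conditioning argument, and flag that no other subtlety appears beyond what already shows up in \cref{lem:counter,lem:ips}.
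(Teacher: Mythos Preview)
Your proposal is correct and follows essentially the same approach as the paper's proof: decompose $\hat{V}_\textsc{dr}(\pi)$ into the DM term plus the IPS-weighted correction, then show the correction either vanishes after conditioning on $(x_t,A_t)$ (DM-unbiased case) or cancels with $\hat{V}_\textsc{dm}(\pi)$ after the change of measure (correct-propensities case). Your explicit handling of $\hat{w}$ via conditioning is more careful than the paper, which tacitly treats $\hat{w}$ as fixed.
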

\begin{proof}
When the reward model is correctly specified,
\begin{align*}
  \condE{\I{a_{t, 1} = a_{t, *, 1}} - r(x_t, A_t; \hat{w})}{x_t, A_t}
  = r(x_t, A_t) - r(x_t, A_t; \hat{w})
  = 0\,,
\end{align*}
and the first term in \eqref{eq:dr} vanishes. This proves the first claim.

When the propensities are correctly specified,
\begin{align*}
  \Erv{A_t \sim \pi_0(\cdot \mid x_t)}
  {\frac{\pi(A_t \mid x_t)}{\pi_0(A_t \mid x_t)}{r(x_t, A_t; \hat{w})}}
  = \Erv{A_t \sim \pi(\cdot \mid x_t)}{r(x_t, A_t; \hat{w})}\,.
\end{align*}
In this case, $r(x_t, A_t; \hat{w})$ and $\hat{V}_\textsc{dm}(\pi)$ cancel out. This proves the second claim.
\end{proof}

\begin{lemma}
\label{lem:soft counter} Let $\hat{V}_\textsc{set}(\pi)$ be defined as in \eqref{eq:soft counter}. Then $\E{\hat{V}_\textsc{set}(\pi)} = V(\pi)$.
\end{lemma}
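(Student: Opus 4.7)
The plan is to verify unbiasedness by taking the expectation of the estimator and matching it to the derivation of $V(\pi)$ already carried out in \cref{sec:advanced ips}, just above equation \eqref{eq:soft counter}. In effect, the lemma asks us to read that derivation in reverse: the main text shows
\begin{align*}
  V(\pi)
  = \frac{1}{n} \sum_{t = 1}^n
  \Erv{S_t \sim \nu(\cdot \mid x_t)}{\pi(a_{t, *, 1} \mid S_t, x_t)}\,,
\end{align*}
and the estimator in \eqref{eq:soft counter} is precisely the empirical counterpart of this expression under the sampling scheme $S_t \sim \nu(\cdot \mid x_t)$.

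First I would expand the expectation of the estimator over both sources of randomness, the set $S_t \sim \nu(\cdot \mid x_t)$ and the human permutation $A_{t, *}$ drawn conditionally on $(x_t, S_t)$. Then I would apply the tower property to pull the human randomness inside, yielding
\begin{align*}
  \E{\hat{V}_\textsc{set}(\pi)}
  = \frac{1}{n} \sum_{t = 1}^n
  \Erv{S_t \sim \nu(\cdot \mid x_t)}
  {\condE{\pi(a_{t, *, 1} \mid S_t, x_t)}{x_t, S_t}}\,.
\end{align*}
This inner conditional expectation is exactly the sixth line in the chain of identities for $V(\pi)$ in \cref{sec:advanced ips}, so identifying the two expressions finishes the proof.

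The only substantive step, and the one the author flags as immediate, is justifying that conditioning on the set $S_t$ (rather than on the ranked list $A_t$) does not lose information about the human response. This is the same assumption already invoked in the fourth equality of the $V(\pi)$ derivation, namely that the law of $A_{t, *}$ given $(x_t, A_t)$ depends on $A_t$ only through its underlying set $S_t$. Under this assumption, every line of the earlier derivation is reversible, and nothing further is required: the proof is a one-paragraph invocation of the identity already established in the main text.
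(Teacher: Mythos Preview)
Your proposal is correct and matches the paper's own proof essentially line for line: expand the expectation over $S_t \sim \nu(\cdot \mid x_t)$ and $A_{t,*} \mid x_t, S_t$, apply the tower property to obtain $\Erv{S_t \sim \nu(\cdot \mid x_t)}{\condE{\pi(a_{t,*,1} \mid S_t, x_t)}{x_t, S_t}}$, and then identify this with the last display of the $V(\pi)$ derivation in \cref{sec:advanced ips}. Your remark that the only substantive ingredient is the assumption that the law of $A_{t,*}$ depends on $A_t$ only through $S_t$ is exactly right, and the paper uses it in the same way.
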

\begin{proof}
The proof follows from introducing a conditional expectation,
\begin{align*}
  \E{\hat{V}_\textsc{set}(\pi)}
  & = \frac{1}{n} \sum_{t = 1}^n
  \Erv{S_t \sim \nu(\cdot \mid x_t), \, A_{t, *} \sim \cdot \mid x_t, S_t}
  {\pi(a_{t, *, 1} \mid S_t, x_t)} \\
  & = \frac{1}{n} \sum_{t = 1}^n
  \Erv{S_t \sim \nu(\cdot \mid x_t)}
  {\condE{\pi(a_{t, *, 1} \mid S_t, x_t)}{x_t, S_t}}
  = V(\pi)\,.
\end{align*}
This completes the proof.
\end{proof}

\begin{lemma}
\label{lem:set ips} Let $\hat{V}_\textsc{set-ips}(\pi)$ be defined as in \eqref{eq:set ips}. Then $\E{\hat{V}_\textsc{set-ips}(\pi)} = V(\pi)$.
\end{lemma}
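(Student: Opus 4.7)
The plan is to combine the importance-weighting argument used in \cref{lem:ips} with the set-level representation of $V(\pi)$ that was derived immediately before \eqref{eq:soft counter} and exploited in \cref{lem:soft counter}. The statement says that reweighing the set-level estimator $\hat{V}_\textsc{set}(\pi)$ by the ratio $\nu(S_t\mid x_t)/\nu_0(S_t\mid x_t)$ exactly corrects for the fact that the logged sets were drawn from $\nu_0(\cdot\mid x_t)$ rather than $\nu(\cdot\mid x_t)$, so intuitively one only has to redo \cref{lem:soft counter} with an additional Radon--Nikodym factor.

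Concretely, I would start from the definition in \eqref{eq:set ips} and write
\begin{align*}
  \E{\hat{V}_\textsc{set-ips}(\pi)}
  = \frac{1}{n}\sum_{t=1}^n
  \Erv{S_t \sim \nu_0(\cdot\mid x_t),\,A_{t,*}\sim \cdot\mid x_t, S_t}
  {\frac{\nu(S_t\mid x_t)}{\nu_0(S_t\mid x_t)}\,\pi(a_{t,*,1}\mid S_t, x_t)}.
\end{align*}
Next I would pull out the conditional expectation on $A_{t,*}$ given $(x_t, S_t)$, since the importance ratio depends only on $(x_t, S_t)$, and replace the outer $\nu_0$-expectation by a sum over $S_t\in\cS$ weighted by $\nu_0(S_t\mid x_t)$. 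The ratio cancels one factor of $\nu_0(S_t\mid x_t)$, yielding
\begin{align*}
  \E{\hat{V}_\textsc{set-ips}(\pi)}
  = \frac{1}{n}\sum_{t=1}^n \sum_{S_t \in \cS}
  \nu(S_t\mid x_t)\,\condE{\pi(a_{t,*,1}\mid S_t, x_t)}{x_t, S_t},
\end{align*}
which is exactly the expression for $V(\pi)$ obtained in the chain of equalities preceding \eqref{eq:soft counter}.

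The final step is to invoke that chain (or equivalently \cref{lem:soft counter}) to conclude that the right-hand side equals $V(\pi)$. I do not expect any real obstacle here: the only subtlety is making explicit that the randomness of the human feedback $A_{t,*}$ depends only on $(x_t, S_t)$ (which is the same assumption flagged as the ``fourth equality'' in the derivation before \eqref{eq:soft counter}), so that conditioning on $(x_t, S_t)$ is legitimate and the importance-weight manipulation is independent of the inner expectation. Given that, the argument is a two-line routine calculation completely parallel to \cref{lem:ips}, with sets and $\nu$ playing the roles of ranked lists and $\pi$.
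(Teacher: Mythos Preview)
Your proposal is correct and matches the paper's own proof essentially line for line: take the expectation under $\nu_0$ and the conditional human feedback, factor out the importance ratio (which depends only on $(x_t,S_t)$), use it to change the outer measure from $\nu_0$ to $\nu$, and then identify the result with the set-level expression for $V(\pi)$ established before \eqref{eq:soft counter}. The only cosmetic difference is that you expand the $\nu_0$-expectation as an explicit sum over $\cS$, whereas the paper keeps it in expectation notation throughout.
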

\begin{proof}
The proof is similar to \cref{lem:soft counter}, with the addition of propensity scores. In particular,
\begin{align*}
  \E{\hat{V}_\textsc{set-ips}(\pi)}
  & = \frac{1}{n} \sum_{t = 1}^n
  \Erv{S_t \sim \nu_0(\cdot \mid x_t), \, A_{t, *} \sim \cdot \mid x_t, S_t}
  {\frac{\nu(S_t \mid x_t)}{\nu_0(S_t \mid x_t)}
  \pi(a_{t, *, 1} \mid S_t, x_t)} \\
  & = \frac{1}{n} \sum_{t = 1}^n
  \Erv{S_t \sim \nu_0(\cdot \mid x_t)}
  {\frac{\nu(S_t \mid x_t)}{\nu_0(S_t \mid x_t)}
  \condE{\pi(a_{t, *, 1} \mid S_t, x_t)}{x_t, S_t}} \\
  & = \frac{1}{n} \sum_{t = 1}^n
  \Erv{S_t \sim \nu(\cdot \mid x_t)}
  {\condE{\pi(a_{t, *, 1} \mid S_t, x_t)}{x_t, S_t}}
  = V(\pi)\,.
\end{align*}
This completes the proof.
\end{proof}

\begin{lemma}
\label{lem:set dr} Consider $\hat{V}_\textsc{set-dr}(\pi)$ in \eqref{eq:set dr}. Then $\E{\hat{V}_\textsc{set-dr}(\pi)} = V(\pi)$ when the DM is unbiased or the propensities in the IPS estimator are correctly specified.
\end{lemma}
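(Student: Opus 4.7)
The plan is to mirror the two-case decomposition of the proof of \cref{lem:dr}, but with set-level quantities and using the identities derived in \cref{sec:advanced ips}. Writing $\hat{V}_\textsc{set-dr}(\pi) = \frac{1}{n}\sum_t G_t + \hat{V}_\textsc{dm}(\pi)$, where
\begin{align*}
G_t = \frac{\nu(S_t \mid x_t)}{\nu_0(S_t \mid x_t)}\bigl(\pi(a_{t,*,1} \mid S_t, x_t) - r(x_t, S_t; \pi, \hat{w})\bigr)
\end{align*}
is the set-IPS-style correction, I would argue separately that under each of the two sufficient conditions either $\E[G_t] = 0$ (so only the DM remains, and it is unbiased by assumption) or the reward-model terms cancel between $G_t$ and $\hat{V}_\textsc{dm}$ (so only the unbiased \setips estimator remains).

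\emph{Case 1 (reward model correct, DM unbiased).} In this case $r(x,A;\hat{w}) = r(x,A)$, so by definition $r(x_t, S_t; \pi, \hat{w}) = \sum_{A \in \Pi(S_t)} \tfrac{\pi(A \mid x_t)}{\nu(S_t \mid x_t)}\, r(x_t, A)$. The very chain of identities used in \cref{sec:advanced ips} to derive $\hat{V}_\textsc{set}$ (unfolding $r$ via $r(x_t, A_t) = \condprob{a_{t,1}=a_{t,*,1}}{x_t, A_t}$, then marginalizing out $A_t \mid S_t$) shows that this quantity equals $\condE{\pi(a_{t,*,1} \mid S_t, x_t)}{x_t, S_t}$. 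Conditioning $G_t$ on $(x_t, S_t)$ therefore gives zero, so the correction term vanishes in expectation and $\E[\hat{V}_\textsc{set-dr}(\pi)] = \E[\hat{V}_\textsc{dm}(\pi)] = V(\pi)$.

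\emph{Case 2 (propensities correct).} I would apply the change-of-measure from \cref{lem:set ips} to the reward-model piece of $G_t$:
\begin{align*}
\Erv{S_t \sim \nu_0(\cdot \mid x_t)}{\frac{\nu(S_t \mid x_t)}{\nu_0(S_t \mid x_t)} r(x_t, S_t; \pi, \hat{w})}
= \Erv{S_t \sim \nu(\cdot \mid x_t)}{r(x_t, S_t; \pi, \hat{w})}.
\end{align*}
Expanding $r(x,S;\pi,\hat{w}) = \sum_{A \in \Pi(S)} \tfrac{\pi(A \mid x)}{\nu(S \mid x)} r(x,A;\hat{w})$ and interchanging the outer sum over sets with the inner sum over permutations collapses $\sum_S \sum_{A \in \Pi(S)}$ into a single sum over ranked lists of length $K$, yielding $\Erv{A_t \sim \pi(\cdot \mid x_t)}{r(x_t, A_t; \hat{w})}$, which is exactly the $t$-th summand of $\hat{V}_\textsc{dm}(\pi)$. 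Thus $\hat{V}_\textsc{dm}$ cancels with the reward-model piece of $\frac{1}{n}\sum_t G_t$, leaving $\hat{V}_\textsc{set-ips}(\pi)$, which is unbiased by \cref{lem:set ips}.

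The only nontrivial bookkeeping is verifying the two identities $\sum_{A \in \Pi(S)} \tfrac{\pi(A \mid x)}{\nu(S \mid x)}\, r(x,A) = \condE{\pi(a_{*,1} \mid S,x)}{x,S}$ (Case 1) and $\Erv{S \sim \nu(\cdot \mid x)}{r(x,S;\pi,\hat{w})} = \Erv{A \sim \pi(\cdot \mid x)}{r(x,A;\hat{w})}$ (Case 2). Neither is an obstacle in any deep sense: both are immediate from the factorization $\pi(A \mid x) = \nu(S \mid x) \cdot \tfrac{\pi(A \mid x)}{\nu(S \mid x)}$ and the relation \eqref{eq:lists to sets} already established in \cref{sec:advanced ips}, so no new machinery is required beyond what the paper has already assembled.
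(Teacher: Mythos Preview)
Your proposal is correct and follows essentially the same approach as the paper: the same two-case split, the same conditional-expectation identity $\condE{\pi(a_{t,*,1}\mid S_t,x_t)}{x_t,S_t}=\sum_{A\in\Pi(S_t)}\tfrac{\pi(A\mid x_t)}{\nu(S_t\mid x_t)}r(x_t,A)$ for Case~1, and the same change-of-measure followed by collapsing the double sum over sets and permutations into $\Erv{A\sim\pi(\cdot\mid x_t)}{r(x_t,A;\hat{w})}$ for Case~2. Your explicit bookkeeping of the two needed identities is, if anything, slightly more careful than the paper's presentation.
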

\begin{proof}
We start by noting that
\begin{align*}
  \condE{\pi(a_{t, *, 1} \mid S_t, x_t)}{x_t, S_t}
  = \sum_{A \in \Pi(S_t)} \frac{\pi(A \mid x_t)}{\nu(S_t \mid x_t)} \, r(x_t, A)
\end{align*}
and
\begin{align*}
  r(x_t, S_t; \pi, \hat{w})
  = \sum_{A \in \Pi(S_t)} \frac{\pi(A \mid x_t)}{\nu(S_t \mid x_t)} r(x_t, A; \hat{w})\,.
\end{align*}
Therefore, when the reward model is correctly specified,
\begin{align*}
  \condE{\pi(a_{t, *, 1} \mid S_t, x_t) - r(x_t, S_t; \pi, \hat{w})}{x_t, S_t}
  = 0\,,
\end{align*}
and the first term in \eqref{eq:set dr} vanishes. This proves the first claim.

When the propensities are correctly specified,
\begin{align*}
  \Erv{S_t \sim \nu_0(\cdot \mid x_t)}
  {\frac{\nu(S_t \mid x_t)}{\nu_0(S_t \mid x_t)}{r(x_t, S_t; \pi, \hat{w})}}
  & = \Erv{S_t \sim \nu(\cdot \mid x_t)}{r(x_t, S_t; \pi, \hat{w})} \\
  & = \Erv{S_t \sim \nu(\cdot \mid x_t)}
  {\sum_{A \in \Pi(S_t)} \frac{\pi(A \mid x_t)}{\nu(S_t \mid x_t)} r(x_t, A; \hat{w})} \\
  & = \Erv{A \sim \pi(\cdot \mid x_t)}{r(x_t, A; \hat{w})}\,.
\end{align*}
In this case, $r(x_t, S_t; \pi, \hat{w})$ and $\hat{V}_\textsc{dm}(\pi)$ cancel out. This proves the second claim.
\end{proof}

\section{Gradients}
\label{sec:gradients}

The gradient of the regularizer is derived as follows. First, we fix interaction $t \in [n]$. Since $t$ is fixed, $x_t$ is fixed, and thus we can write $\pi(\cdot; \theta)$ instead $\pi(\cdot; x_t, \theta)$. Then, using basic algebra, we get
\begin{align*}
  & \nabla d(\pi(\cdot; \theta), \pi_0) \\
  & = \sum_{a \in \cA} \nabla [\pi(a; \theta)
  (\log \pi(a; \theta) - \log \pi_0(a))] \\
  & = \sum_{a \in \cA} [\nabla \pi(a; \theta)] \log \pi(a; \theta) +
  \pi(a; \theta) \nabla \log \pi(a; \theta) -
  [\nabla \pi(a; \theta)] \log \pi_0(a) \\
  & = \sum_{a \in \cA} \pi(a; \theta) [\nabla \log \pi(a; \theta)]
  [1 + \log \pi(a; \theta) - \log \pi_0(a)]\,.
\end{align*}
This implies that for $a \sim \pi(\cdot; \theta)$,
\begin{align*}
  \nabla d(\pi(\cdot; \theta), \pi_0)
  = \E{(\nabla \log \pi(a; \theta)) (1 + \log \pi(a; \theta) - \log \pi_0(a))}\,.
\end{align*}
Therefore, $(\nabla \log \pi(a; \theta)) (1 + \log \pi(a; \theta) - \log \pi_0(a))$ is an unbiased single-sample estimate of the gradient.

The DM gradient is computed as follows. We sample $\tilde{A}_t \sim \pi(\cdot \mid x_t; \theta)$ and then use
\begin{align}
  \frac{1}{n} \sum_{t = 1}^n
  \nabla \log \pi(\tilde{A}_t \mid x_t; \theta) \,
  r(x_t, \tilde{A}_t; \hat{w})\,.
  \label{eq:dm gradient}
\end{align}
This is an unbiased single-sample estimate of $\nabla \hat{V}_\textsc{dm}(\pi)$.

The IPS gradient is computed directly as
\begin{align}
  \nabla \hat{V}_\textsc{ips}(\pi)
  = \frac{1}{n} \sum_{t = 1}^n
  \frac{\nabla \pi(A_t \mid x_t; \theta)}{\pi_0(A_t \mid x_t)}
  \I{a_{t, 1} = a_{t, *, 1}}\,.
  \label{eq:ips gradient}
\end{align}
Combining the above, the DR gradient can be computed as follows. We sample $\tilde{A}_t \sim \pi(\cdot \mid x_t; \theta)$ and then use
\begin{align}
  \frac{1}{n} \sum_{t = 1}^n
  \frac{\nabla \pi(A_t \mid x_t; \theta)}{\pi_0(A_t \mid x_t)}
  (\I{a_{t, 1} = a_{t, *, 1}} - r(x_t, A_t; \hat{w})) +
  \nabla \hat{V}_\textsc{dm}(\pi)\,.
  \label{eq:dr gradient}
\end{align}
This is an unbiased single-sample estimate of $\nabla \hat{V}_\textsc{dr}(\pi)$.

% \newpage

% \input{Checklist}

\end{document}